\documentclass[twoside,leqno,twocolumn]{article}

\usepackage[letterpaper]{geometry}
\usepackage{makecell}
\usepackage{ltexpprt}
\usepackage{graphicx}
\usepackage{wrapfig}
\usepackage{amsmath}
\usepackage{subfig}
\usepackage{esvect}
\usepackage{url}
\usepackage{cite}
\usepackage[bold,full]{complexity}
\usepackage{color}
\usepackage{amssymb}
\usepackage{empheq} 
\usepackage{amsmath}
\usepackage{commath}
\usepackage{indentfirst}
\usepackage{amsfonts}
\usepackage{xcolor}
\usepackage{hyperref}% 
\usepackage{cleveref}% 
\usepackage{algorithm}
\usepackage{paralist}
\usepackage[noend]{algpseudocode}

\DeclareMathOperator*{\argmin}{\arg\!\min}

% \renewcommand{\thepage}{\thelecnum-\arabic{page}}
% \renewcommand{\thesection}{\thelecnum.\arabic{section}}
% \renewcommand{\theequation}{\thelecnum.\arabic{equation}}
% \renewcommand{\thefigure}{\thelecnum.\arabic{figure}}
% \renewcommand{\thetable}{\thelecnum.\arabic{table}}

% \newcounter{lecnum}
% \newtheorem{theorem}{Theorem}[lecnum]
% \newtheorem{lemma}[theorem]{Lemma}
% \newtheorem{example}[theorem]{Example}
% \newtheorem{proposition}[theorem]{Proposition}
% \newtheorem{claim}[theorem]{Claim}
% \newtheorem{corollary}[theorem]{Corollary}
% \newtheorem{definition}[theorem]{Definition}
% \newenvironment{proof}{{\bf Proof:}}{\hfill\rule{2mm}{2mm}}

% \fgr{width = 0.50\textwidth}{FIG/duck.pdf}
\newcommand{\fgr}[3][\relax]{%
	\begin{figure}[htp]%
		\centering
		\includegraphics[#2]{#3}%
		\ifx\relax#1\else\caption{{#1}}\fi
	\end{figure}%
}

%\begin{center}
%	\begin{tabular}{cc}
%		\centering
%		\includegraphics[width = 0.3\textwidth]{FIG/contour} &
%		\includegraphics[width = 0.50\textwidth]{FIG/TVsalesfit}  \\
%		(a) contours of least squares loss  & (b) least squares fit for regression of {\tt sales} onto {\tt TV}
%	\end{tabular}
%\end{center}

\newcommand{\cbit}{\begin{compactitem}}
	\newcommand{\ceit}{\end{compactitem}}
\newcommand{\cben}{\begin{compactenum}}
	\newcommand{\ceen}{\end{compactenum}}

\newcommand{\bal}{\begin{align}}
\newcommand{\ean}{\end{align}}
\newcommand{\bit}{\begin{itemize}}
\newcommand{\eit}{\end{itemize}}
\newcommand{\ben}{\begin{enumerate}}
\newcommand{\een}{\end{enumerate}}
\newcommand{\beq}{\begin{equation}}
\newcommand{\eeq}{\end{equation}}

\newcommand{\mY}{\mathcal{Y}}

\newcommand{\mM}{\mathcal{M}}
\newcommand{\mF}{\mathcal{F}}

\newcommand{\mS}{\mathcal{S}}

\newcommand{\mN}{\mathcal{N}}
\newcommand{\mO}{\mathcal{O}}
\newcommand{\mT}{\mathcal{T}}
\newcommand{\mV}{\mathcal{V}}
\newcommand{\mG}{\mathcal{G}}

\newcommand{\mOk}{\mathcal{O}^{(k)}}

\newcommand{\mYk}{\mathcal{Y}^{(k)}}

\newcommand{\rhok}{\rho^{(k)}}

\newcommand{\bN}{\mathbf{N}}

\newcommand{\by}{\mathbf{y}}
\newcommand{\byk}{\mathbf{y}^{(k)}}
\newcommand{\bz}{\mathbf{z}}
\newcommand{\bzk}{\mathbf{z}^{(k)}}

\newcommand{\bu}{\mathbf{u}}
\newcommand{\bv}{\mathbf{v}}
\newcommand{\bone}{\mathbf{1}}
\newcommand{\bOk}{\mathbf{O}^{(k)}}
\newcommand{\bO}{\mathbf{O}}
\newcommand{\buk}{\mathbf{u}^{(k)}}
\newcommand{\bvk}{\mathbf{v}^{(k)}}
\newcommand{\ls}{\log^{\star}}
\newcommand{\lbdak}{\lambda^{(k)}}

\newcommand{\hide}[1]{}

\vspace{-8ex}
\date{}
\begin{document}

\title{\Large Coverage-based Outlier Explanation }

% \author[1]{Yue Wu}
% \author[2]{Leman Akoglu}
% \author[1]{Ian Davidson}
% \affil[1]{Department of Computer Science, University of California, Davis}
% \affil[2]{Heinz College of Information Systems and Public Policy, Carnegie Mellon University}

\author{Yue Wu\thanks{Department of Computer Science, University of California, Davis. yvwu@ucdavis.edu, davidson@cs.ucdavis.edu}
\and Leman Akoglu\thanks{Heinz College of Information Systems and Public Policy, Carnegie Mellon University. lakoglu@andrew.cmu.edu.}
\and Ian Davidson \footnotemark[1]}
% \thanks{1: Department of Computer Science, University of California, Davis.}
% \thanks{2: Heinz College of Information Systems and Public Policy, Carnegie Mellon University}

\maketitle

\fancyfoot[R]{\scriptsize{Copyright \textcopyright\ 2020 by SIAM\\
Unauthorized reproduction of this article is prohibited}}

\begin{abstract}\small\baselineskip=9pt
\noindent Outlier detection is a core task in data mining with a plethora of algorithms that have enjoyed wide scale usage. Existing algorithms are primarily focused on detection, that is the identification of outliers in a given dataset. In this paper we explore the relatively under-studied problem of the outlier explanation problem. Our goal is, given a dataset that is already divided into outliers and normal instances, explain what characterizes the outliers. We explore the novel direction of a semantic explanation that a domain expert or policy maker is able to understand. We formulate this as an optimization problem to find explanations that are both interpretable and pure. Through experiments on real-world data sets, we quantitatively show that our method can efficiently generate better explanations compared with rule-based learners.
\end{abstract}

\noindent\textbf{Keywords} Outlier Explanation, Constrained Set Cover

\section{Introduction}

Outlier detection is the task of identifying a subset of a given data set which are considered anomalous in that they are unusual from other instances~\cite{Chandola:2009:ADS:1541880.1541882}. It is one of the core data mining tasks and is central to many applications. In the security field, it can be used to identify potentially threatening users, in the manufacturing field it can be used to identify parts that are likely to fail and in social network field it can be used to identify individuals with unusual characteristics which may even be desirable. The extensive surveys on outlier detection~\cite{Chandola:2009:ADS:1541880.1541882} and graph-based outlier detection~\cite{DBLP:journals/corr/AkogluTK14} outlines many types of anomalies and associated algorithms. 

\textbf{The Need For Description versus Detection}. What all these previous methods have in common is that they only return subsets considered anomalous without \emph{describing} why those instances are anomalous. Informative descriptions that explain the behavior of outliers serve multiple purposes to multiple audiences. For data scientists, it helps to verify the correctness of their algorithms. For external parties (i.e., policy makers), it helps to understand issues such as fairness \cite{barocas2014datas, fairness_nips17}. Explanation of why something is anomalous also can be used for other purposes, such as troubleshooting. For example in the manufacturing setting it can help prevent anomalies in the future by feeding back directly into the manufacturing process \cite{manufacturing_anomaly}.

Explanation in AI is an emerging area. In the context of outlier explanation, it attempts to explain to a human why something is tagged as an outlier. Unlike outlier detection methods, the search is not for \textbf{what} is an outlier but rather \textbf{why} it is an outlier. In some cases it can be used as a natural post-processing of outlier detection algorithms. It can also be used in more general situations when outliers are \emph{already known}. Consider car manufacturing in the USA. The strong lemon laws~\cite{lemon_law} and dealership networks mean that car manufacturers already know the outliers (identified by customers and dealers) but need an explanation. 

\textbf{Method of Explanation.} Though the motivation for explanation is clear, how to explain outliers is less obvious. Explanations in terms of the objective functions of algorithms are challenging as they are not easily understood by non-data scientists. For example, graph anomalies can be detected based on statistics of the neighborhood subgraph~\cite{OddBall}. However, this uses artifacts that a domain expert or policy maker is unlikely to completely understand. Similarly, explanations in terms of the features used to find the outliers are challenging as they are often inherently complex and have little semantic meaning. 

In this work, we assume that there is a set of semantic tags for each instance which are useful for explaining as they have semantic meanings to domain experts and policy makers. The outliers could have been identified by either the same set of semantic tags or a different set of features that may be useful for detection but not for explanation. We will use the semantic tags to find meaningful descriptions to characterize the outliers. A tag can be an individual feature or conjunctions of features generated by say rule mining algorithms. We formulate the problem as a complex variation of set cover~(SC). Recall the basic set cover consists of a universe of elements and many subsets of those elements with the aim to select the smallest number of subsets to cover the entire universe. Here the notion of coverage is analogous to explanation. 

We argue that an explanation of outliers should have good interpretability and purity, that is, the description should be concise and should not cover/explain any (or just a few) normal instances. We start with the base setting with $1$ group of normal instances and $1$ group of outliers, and propose a base formulation, purity-constrained description minimization~(PDM). PDM has a sufficiency requirement, which characterizes the common behavior of outliers, and a necessity requirement, which separates the behavior of outliers from that of normal instances. We then propose two variants to the base formulation, weighted cross coverage minimization~(WCCM) and budgeted cross coverage minimization~(BCCM). In WCCM, the sufficiency and necessity requirements are relaxed so that feasible solutions are guaranteed to exist. In BCCM, we further incorporate a cardinality constraint on the description to improve the interpretability. We also propose disjoint WCCM~(D-WCCM) and disjoint BCCM~(D-BCCM) that extend WCCM and BCCM to explain multiple groups of outliers, and guarantee the orthogonality of descriptions. The proposed formulations are all intractable, so we present approximation methods for (D-)WCCM and (D-)BCCM. We compare the proposed methods with a cluster description method~\cite{DTDM}, and rule-based learners~\cite{RIPPER,BRL,BRS}, which can naturally be baselines to our coverage-based method. The experimental results show that our methods can generate interpretable and purer descriptions for outliers, and scale to large data sets.

Our contributions can be summarized as follows:
\begin{itemize}
	\item We explore the outlier explanation problem by formulating two set cover problems with constraints to produce interpretable and pure descriptions. 
	\item We analyze the complexity of our formulations and propose approximation algorithms. We prove the approximation factor for one of the algorithms.
	\item We propose metrics to evaluate purity and interpretability, and demonstrate the efficiency and effectiveness of our method by comparing with rule-based learners and a state-of-the-art cluster description method on publicly available data sets.
\end{itemize}

\section{Related Work}
\label{sec:related_work}
There are existing works on the problem of outlier/anomaly explanation~\cite{lookout, explain_by_subspace, aaai16_tom}. LookOut~\cite{lookout} finds pairs of features to pictorially explain outliers, x-PACS \cite{explain_by_subspace} discovers anomaly clusters based on feature subspaces that compactly represent anomalies and separate them from normal instances. Another work uses constrained programming to find subsets of features that maximize the difference between the number of neighbors of normal and outliers~\cite{aaai16_tom}. However, their descriptions are performed on a set of continuous features and attempt to find a sub-space as an explanation. Directly applying their work on discrete features (i.e., tags) will not produce good results. 

Our previous work on clustering is superficially similar as it finds explanations using tags. However, the descriptive clustering method~\cite{descriptive_clustering} finds clusters and explanations simultaneously. The two objectives of their work are to find dense clusters and compact descriptions, and solved via Pareto optimization. Since they use integer programming and constraint programming solvers, their method does not scale (limited to hundreds of instances).

Disjoint tag description minimization~(DTDM~\cite{DTDM}) is a cluster description method. It also returns a set of tags as an explanation of a cluster. However, it only takes the interpretability (not purity) into account because its objective is to minimize the description length with one orthogonality constraint that makes the descriptions for different clusters disjoint. If applied to our problem, DTDM does not differentiate outliers from normal instances because the tags for outliers may also apply to normal instances.

\section{Problem Definition and Formulation}
\label{sec:problem_definition_and_formulation}
We define each problem by providing the integer linear programming (ILP) formulations with data matrices as inputs. The frequently used notations are summarized as follows.
\subsection{Notations}
\cbit
    \item $\mathcal{N} = \{m_1,...,m_n\}$ denotes the set of normal instances, and $n = |\mN|$.
    \item $\mathcal{O} = \{m_1,...,m_o\}$ denotes the set of outliers, and $o = |\mO|$.
    \item $\mathcal{M} = \{m_1,...,m_{|\mathcal{M}|}\}$ denotes the set of all instances, $\mM = \mN \cup \mO$.
    \item $\mT = \{t_1,...,t_d\}$ denotes the universal set of tags (features), and $|\mT| = d$.
    \item $\bN \in \{0, 1\}^{n\times d}$ denotes the tag/data matrix of normal instances. $\bN(i, j) = 1$ denotes $m_i\in \mN$ has $t_j$.
    \item $\bO \in \{0, 1\}^{n\times d}$ denotes the tag/data matrix of outliers. $\bO(i, j) = 1$ denotes $m_i\in \mO$ has $t_j$.
    \item $\bu$ denotes the size $n$ binary selection vector for normal instances, $\bu_i = 1$ denotes $m_i\in \mN$ is ignored.
    \item $\bv$ denotes the size $o$ binary selection  vector for outliers, $\bv_i = 1$ denotes $m_i\in \mO$ is ignored.
    \item $\by$ denotes the size $d$ binary tag selection vector for outliers, $\by_j = 1$ denotes that $t_j$ is selected to explain $\mO$.
\ceit

\subsection{Sufficiency and Necessity}

Assume we are given $\mT$, $\mM$ and a tag set $T_i \subseteq \mT$ associated with each instance $m_i$, and a partition $\pi$ of $\mM$ into $2$ groups $\mN$ and $\mO$. The goal is to find a description $\mY \subseteq \mT$ for the group of outliers. The description must satisfy the sufficiency (SUFF) and necessity (NECE) described as follows.
\begin{itemize}
    \item[\emph{SUFF}] Every outlier in the group is covered by at least one tag in the description; formally, for each $m_j\in \mathcal{O}$, $\mathcal{Y} \cap T_j \ne \emptyset$.
    \item[\emph{NECE}] No normal instance exhibits any tags in the description; formally, for each $m_i\in \mathcal{N}$, $\mathcal{Y} \cap T_i = \emptyset$.
\end{itemize}
We define in the following the problem of purity-constrained description minimization~(PDM), which aims to find the most compact description that satisfies SUFF and NECE.
\begin{align}
\min_{\mathbf{y}} \quad & |\mathbf{y}| \;\;\;\; & \bullet \text{ PDM} \label{eq:oe_sn_obj}\\
\text{s.t.} \quad &  
\mathbf{O} \mathbf{y} \ge \mathbf{1} \;\;\;\; &\circ \text{ SUFF} \label{eq:oe_sn_suff}\\
& \mathbf{N} \mathbf{y} = \mathbf{0} \;\;\;\; &\circ \text{ NECE} \label{eq:oe_sn_nece}
\end{align}
PDM is our base formulation for outlier explanation. However, finding a feasible solution for PDM is very difficult in practice because the purity constraint is very strict. To resolve this issue we propose the following variants to PDM that guarantee feasible solutions while maintaining the quality of explanations.

\subsection{Relaxed Conditions}
We define the relaxed sufficiency (R-SUFF) and relaxed necessity (R-NECE) as
\begin{itemize}
    \item[\emph{R-SUFF}] The number of uncovered outliers is at most $v$; formally, $|\{T_j | T_j \cap \mY = \emptyset, m_j \in \mO\}| \le v$.
    \item[\emph{R-NECE}] The number of covered normal instances is at most $u$; formally, $|\{T_i | T_i \cap \mY \ne \emptyset, m_i \in \mN\}| \le u$
\end{itemize}
Intuitively, R-SUFF allows the solution to have uncovered outliers bounded by $v$, and R-NECE allows to have covered normal instances, which we call \emph{cross coverage}, bounded by $u$. R-NECE can guarantee that we find feasible descriptions as long as we have a large enough tolerance of the cross coverage. R-SUFF is even more meaningful in practice because outliers can have irregular patterns, or may even be misidentified. Thus, the ability to identify those uncovered ``crazy'' outliers contributes to better characterizing the behavioral pattern of outliers or re-identify outliers in an active learning setting. 

Based on these relaxations, we define in the following two variants of PDM, 1) weighted cross coverage minimization (WCCM) and 2) budgeted cross coverage minimization (BCCM). 

\subsection{Weighted Cross Coverage Minimization}
The goal of WCCM is to find $\mY \subseteq \mT$ such that the number of uncovered outliers plus the number of covered normal instances is minimized. The ILP formulation of WCCM can be written as
\begin{align}
\min_{\by, \bu, \bv} \quad & |\bu| + \lambda|\bv| \;\;\;\; & \bullet \text{ WCCM} \label{eq:obj_rsrn}\\
\text{s.t.} \quad &
\bO \by + \bv \ge \bone \;\;\;\; &\circ \text{ R-SUFF} \\
& \bN \by \le \gamma \bu, \;\;\;\; &\circ \text{ R-NECE}
\end{align}
where $\lambda$ is a weight parameter and $\gamma$ is a large constant such that $\bN_{i\cdot}\by\le \gamma$ when $\bu_i=1$. Due the nature of outliers, that is, they are greatly outnumbered by normal instances, ignoring outliers will result in a lower objective~(Eq.~\eqref{eq:obj_rsrn}). However, it is undesirable to have a description that only covers a small fraction of outliers so we add a parameter $\lambda \ge 1$ in the objective, which makes the cost of ignoring one outlier equal to ignoring $\lambda$ normal instances. Note that this formulation may not return the most compact description, i.e., smallest $|\mY|$, but we can apply SC on the results to reduce redundancy.

\noindent\textbf{Complexity Results}
We prove the following complexity results for WCCM when $\lambda=1$ by showing its relation to positive-negative partial set cover~($\pm$PSC~\cite{pnpsc}). The details of the proof can be found in the supplementary. 
\begin{theorem}
\label{thm:wccm_complexity}
    For any $\epsilon > 0$, (1) it is not possible to obtain an algorithm that returns a solution in polynomial time with approximation factor $\Omega(2^{\log^{1-\epsilon}d^4})$, unless $\NP \subseteq \DTIME(n^{\log\log n})$, and (2) it is not possible to obtain an algorithm that returns a solution in polynomial time with approximation factor $\Omega(2^{\log^{1-\epsilon}o})$, unless $\P = \NP$.
\end{theorem}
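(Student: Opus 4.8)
The plan is to establish both inapproximability bounds by exhibiting an exact, objective-preserving equivalence between WCCM with $\lambda=1$ and the positive-negative partial set cover problem ($\pm$PSC)~\cite{pnpsc}, and then transferring the two known hardness results for $\pm$PSC. Recall that a $\pm$PSC instance consists of a ground set partitioned into positive and negative elements together with a family of subsets, and the objective is to choose a subfamily minimizing (number of uncovered positives) $+$ (number of covered negatives).

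First I would set up the correspondence: identify the outliers $\mO$ with the positive elements, the normal instances $\mN$ with the negative elements, and the tags $\mT$ with the sets, where a tag ``contains'' an instance exactly when that instance exhibits the tag. Under this identification $\by$ is the indicator of the selected subfamily, and I would verify that each WCCM constraint encodes the intended $\pm$PSC quantity. In R-SUFF, $\bO\by + \bv \ge \bone$ forces $\bv_i = 1$ precisely on the uncovered outliers, so $|\bv|$ counts uncovered positives; in R-NECE, $\bN\by \le \gamma\bu$ with $\gamma$ large forces $\bu_i=1$ whenever a selected tag touches normal instance $m_i$, so at optimality $|\bu|$ counts exactly the covered negatives. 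Hence with $\lambda=1$ the objective $|\bu|+|\bv|$ coincides with the $\pm$PSC objective, and feasible solutions are in value-preserving bijection.

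Because this map is an exact bijection on solutions that preserves objective values, it is approximation-factor preserving in both directions: any polynomial-time $\alpha$-approximation for WCCM yields an $\alpha$-approximation for $\pm$PSC, and conversely. I would then invoke the two inapproximability results for $\pm$PSC and read them off in the parameters native to our formulation. For part~(1) I would use the hardness of $\pm$PSC stated in terms of the number of sets; since the sets are the tags, tracking the polynomial relationship between the size parameter of the hardness instance and the number of tags $d$ yields the factor $\Omega(2^{\log^{1-\epsilon}d^4})$ (the power $4$ reflecting the blow-up in the construction) under the assumption $\NP \subseteq \DTIME(n^{\log\log n})$. For part~(2) I would use the hardness of $\pm$PSC stated in terms of the number of positive elements, which map to the outliers, giving $\Omega(2^{\log^{1-\epsilon}o})$ under $\P = \NP$.

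The main obstacle I anticipate is not the equivalence itself, which is essentially syntactic, but the bookkeeping of size parameters and the two distinct complexity assumptions so that the $\pm$PSC bounds land in exactly the claimed form. Concretely, I must check that the hardness instances of $\pm$PSC can be realized by $0/1$ matrices $\bO$ and $\bN$ of the required shape, confirm that the number of tags in the produced instance is polynomially (rather than superpolynomially) related to the size parameter governing the first $\pm$PSC bound so that the exponent becomes $\log^{1-\epsilon}d^4$, and keep the second bound's dependence on the positive elements intact so it transfers cleanly to $o$. Matching $\NP \subseteq \DTIME(n^{\log\log n})$ for part~(1) against $\P = \NP$ for part~(2) requires keeping the two source results separate and not conflating their assumptions.
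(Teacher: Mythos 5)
Your proposal is correct and matches the paper's approach: the paper likewise proves Theorem~\ref{thm:wccm_complexity} by exhibiting the equivalence between WCCM with $\lambda=1$ and positive-negative partial set cover and then importing the two known $\pm$PSC inapproximability bounds (stated respectively in terms of the number of sets, which become the tags $d$, and the number of positive elements, which become the outliers $o$), with the full bookkeeping deferred to the supplementary material. The identification of $|\bv|$ with uncovered positives and $|\bu|$ with covered negatives, and the care you take to keep the two complexity assumptions separate, are exactly the points the paper's argument rests on.
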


\subsection{Budgeted Cross Coverage Minimization}
The motivation of BCCM comes from 1) minimizing the cross coverage ($u$ in R-NECE), since a good description of outliers should not also apply to normal instances; and 2) making the explanation compact. One limitation of PDM is that although the objective is to minimize the length of the description, it can still be very long to cover all the outliers. However, the length of the description is a crucial measure of its interpretability, i.e., shorter descriptions are more interpretable. For instance, the decision tree can have pure leaves by growing the tree to a larger depth. However, it is no longer considered interpretable if the tree is too big, even if the description has no cross coverage between categories. Thus, we introduce a description cardinality constraint that restricts the number of selected tags. A minimum coverage constraint is then naturally incorporated so that the description can cover a desired fraction of outliers. The goal of BCCM is then to find $\mY \subseteq \mT$ such that the number of covered normal instances is minimized, and the description size and the number of uncovered outliers are bounded. We have the following ILP formulation
\begin{align}
\min_{\by, \bu, \bv}\quad & |\bu| & \bullet \text{ BCCM} \label{eq:obj_BCCM}\\
\text{s.t.} \quad &
\bO \by + \bv \ge \bone &\circ \text{ R-SUFF} \\
& \bN \by \le \gamma \bu &\circ \text{ R-NECE} \\
& |\by| \le \theta &\circ \text{ Desc. Cardinality} \label{eq:BCCM_card} \\
& |\bv| \le \rho \cdot o, &\circ \text{ Min. Coverage}
\end{align}
where $\theta > 0$ restricts the number of selected tags and $\rho \in [0, 1)$ restricts the ratio of uncovered outliers. 
$\rho=0$ indicates that all outliers must be covered.
\noindent\textbf{Complexity Results}
The following results that can be directly obtained from the hardness of weighted set cover and maximum coverage problems~\cite{sc_inapprox, max_coverage}. 
\begin{theorem}
Suppose an optimal solution has $y^{\star}$ tags that cover $u^{\star}$ normal instances and $(o - v^{\star})$ outliers, (1) it is not possible to obtain an algorithm that returns in polynomial time a collection of $((1 - \epsilon)\log o) y^{\star}$ tags that covers $(o - v^{\star})$ outliers and $\alpha u^{\star}$ normal instances, for any $\epsilon > 0$, $\alpha > 0$, unless $\NP \subseteq \DTIME(n^{\log\log n})$, and (2) it is not possible to obtain an algorithm that returns in polynomial time a collection of $y^{\star}$ tags that covers $(1 - 1/e + \epsilon)(o - v^{\star})$ outliers and $\alpha u$ normal instances, for any $\epsilon > 0$, $\alpha > 0$, unless $\NP \subseteq \DTIME(n^{\log\log n})$. 
\end{theorem}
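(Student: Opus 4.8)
The plan is to establish both parts by embedding classical set cover and maximum coverage into BCCM so that each bicriteria relaxation collapses to a known single-criterion hardness result. The guiding idea is to construct reduction instances in which one of the two measured quantities---the number of covered normal instances---is identically zero, so that the multiplicative slack $\alpha$ granted by the theorem is vacuous and all the difficulty is forced onto the remaining axis (tag count for part (1), outlier coverage for part (2)).

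For part (1), I would reduce from set cover. Given an instance with universe $U$ and family $S_1,\dots,S_m$, build a BCCM instance whose outliers are the elements of $U$ and whose tags are the sets, setting $\bO(i,j)=1$ iff the $i$-th element lies in $S_j$, taking $\bN$ empty, and fixing $\rho=0$ so that R-SUFF forces every outlier to be covered (hence $v^\star=0$ and $(o-v^\star)=o=|U|$). A feasible description covering all outliers with $y$ tags is then exactly a set cover of size $y$, so the minimal tag count $y^\star$ equals the optimum set cover size while $u^\star=0$. Because $\bN$ is empty, the condition ``covers $\alpha u^\star$ normal instances'' holds trivially for every $\alpha>0$, so a polynomial-time algorithm delivering $((1-\epsilon)\log o)\,y^\star$ tags covering all outliers would yield a $(1-\epsilon)\ln|U|$ approximation for set cover, contradicting \cite{sc_inapprox} unless $\NP\subseteq\DTIME(n^{\log\log n})$.

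For part (2), I would reduce from maximum coverage in the identical manner: outliers are universe elements, tags are sets, $\bN$ is empty, and the cardinality bound $\theta$ is set to the max-coverage budget $k=y^\star$, with $\rho$ chosen to admit exactly the outliers left uncovered by an optimal budget-$k$ selection. Then $(o-v^\star)$ is precisely the maximum number of elements coverable by $k$ sets, and the normal-coverage criterion is again met trivially for every $\alpha>0$. An algorithm covering $(1-1/e+\epsilon)(o-v^\star)$ outliers with $y^\star$ tags would exceed the $1-1/e$ threshold for maximum coverage, contradicting its hardness \cite{max_coverage}; since that hardness holds even under $\P=\NP$ and $\P\subseteq\DTIME(n^{\log\log n})$, the stated conclusion follows a fortiori.

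The reductions themselves are routine incidence-matrix embeddings; the only real obstacle is justifying that the bicriteria slack on the normal instances confers no extra power, which is exactly what the empty-$\bN$ construction secures by driving $u^\star$ (and every achievable $u$) to zero. The remaining bookkeeping is to confirm that $o=|U|$ so the $\log o$ factor aligns with the $\ln|U|$ lower bound up to the logarithm base, and that the choice of $\rho$ in part (2) faithfully encodes the max-coverage budget so that the optimal covered count is indeed $(o-v^\star)$.
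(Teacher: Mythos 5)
Your proposal is correct and matches the paper's intended argument: the paper gives no explicit proof, stating only that the theorem follows directly from the hardness of (weighted) set cover and maximum coverage, and your empty-$\bN$ embedding (which makes the $\alpha$-slack on normal instances vacuous and reduces each part to Feige's thresholds) is precisely the reduction being invoked. The only cosmetic detail worth noting is that a non-empty normal set can be simulated by adding tag-less normal instances, which changes nothing.
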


% \begin{theorem}
    
% \end{theorem}

\subsection{Extensions to Multiple Outlier Groups}
In practice, a partition of outliers is often given. Even if the outliers are given in one group, it is ideal to explain them in subgroups so that each subgroup can have a compact description because outliers in the same group should have more features in common. Here we discuss the extension of WCCM and BCCM into the setting where multiple groups of outliers are given. The input includes $\mT$, $\mM$, $\mN$ defined above, and $K$ groups of outliers $\mOk, k=1,\ldots,K$. Let $o^{(k)}$ be the number of outliers in $k$-th outlier group, and $o = \sum_ko^{(k)}$ denote the total number of outliers. We define the relaxed sufficiency and a relaxed necessity for multiple outlier groups as
\begin{itemize}
    \item[\emph{MR-SUFF}] For the $k$-th outlier group, the number of uncovered outliers is at most $v^{(k)}$; formally, $|\{T_j | T_j \cap \mYk = \emptyset, m_j \in \mOk\}| \le v^{(k)},\, \forall k$
    \item[\emph{MR-NECE}] For the $k$-th outlier group, the number of outliers covered by descriptions of other groups is at most $v^{(k)}$, and the number of covered normal instances is at most $u$; formally, $|\{T_j | T_j \cap \mY^{(l)} \ne \emptyset, m_j \in \mOk\}| \le v^{(k)},\,\forall k,\, l\ne k $, and $|\{T_i | T_i \cap (\bigcup_k \mYk) \ne \emptyset, m_i \in \mN\}| \le u$.
\end{itemize}
Given instance-tag matrices $\mathbf{N}$ and $\bOk$, $k=1..K$, we are solving for $\{\by^{1},...,\by^{(K)}\}$ and binary vectors $\mathbf{u} \in \{0, 1\}^n$, $\bvk \in \{0, 1\}^{o^{(k)}}, \forall k$. The multi-group extension of WCCM can be defined in the following ILP formulation 
\begin{align}
\min_{\{\byk\}, \bu, \{\bvk\}} & |\bu| + \sum_k\lbdak |\bvk| \label{eq:obj_D-WCCM} \;\;\;\; \bullet \text {D-WCCM} \\
\text{s.t.} \quad &
\bOk \byk + \bvk \ge \bone, \, \forall k \label{eq:mrsuff_ilp}\\
& \mathbf{N} \byk \le \gamma \bu,\, \forall k  \label{eq:mrnece_n_ilp}\\
& \bOk \by^{(l)} \le \gamma \bvk,\, \forall k,\, l\ne k \label{eq:mrnece_o_ilp}
\end{align}
It is worthwhile to mention that this formulation implicitly guarantees that the descriptions found for each outlier group are disjoint, i.e., $\sum_k \byk \le \bone$. Thus, we call this extension disjoint weighted cross coverage minimization~(D-WCCM). 

For the multi-group extension of BCCM (D-BCCM), we add cardinality and minimum coverage constraints on each group. We have
\begin{align}
    \min_{\{\byk\}, \bu, \{\bvk\}} & |\bu| & \bullet \text{ D-BCCM} \label{eq:dbccm_obj}\\
    & |\byk| \le \theta^{(k)},\, \forall k \label{eq:desc_card_D-BCCM_ilp}\\
    & |\bvk| \le \rhok o^{(k)},\, \forall k \label{eq:min_cover_D-BCCM_ilp},
\end{align}
with additional constraints Eq.~\eqref{eq:mrsuff_ilp} (MR-SUFF), Eq.~\eqref{eq:mrnece_n_ilp} and Eq.~\eqref{eq:mrnece_o_ilp}(MR-NECE).

\section{Approximations Methods}
\label{sec:approximation}
We focus on the approximation algorithm for D-WCCM and D-BCCM as they also apply to 1 outlier group with few simplifications. We propose two algorithms, Greedy-D-WCCM and Greedy-D-BCCM that are based on the heuristic of the greedy set cover algorithm, and we prove the approximation factor for D-WCCM. Greedy-D-BCCM does not have a performance guarantee, but we demonstrate that it performs very well in practice.

\subsection{Approximation to D-WCCM} 
\begin{algorithm}[h]
	\hspace*{\algorithmicindent} \textbf{Input} $\mN$, $\{\mOk\}$, $\mT$, $\{\lbdak\}$
	\caption{Greedy-D-WCCM}\label{alg:D-WCCM}
	{\fontsize{10}{11}\selectfont
	\begin{algorithmic}[1]
		\State $\mO = \bigcup_k\mOk$ \text{\quad // universal set to cover}
		\State $\mS = \emptyset$ \text{\quad // collections of sets}
		\State $\mYk = \emptyset, \, \forall k$ \text{\quad // selected tags}
		\State $\mG = \{1, \ldots, K\}$ \text{\quad // groups to cover}
		\For{$j = 1$ \textbf{to} $d$} \label{alg:D-WCCM_create_set_and_weight_start}
		\State $S_j=\{m_i\in \mO | t_j \in T_i \}$
		\State $\mS = \mS \cup S_j$
		\For{$k = 1$ \textbf{to} $K$} \label{alg:D-WCCM_weight_assign_begin}
		\State calculate $w(S_j, k)$ using Eq.~\eqref{eq:D-WCCM_weight}\label{alg:D-WCCM_weight_assign_end} \label{alg:D-WCCM_create_set_and_weight_end}
		\EndFor
		\EndFor
		\State $ind = sort(\{\lbdak\})$ \label{alg:D-WCCM_lambda_sort}
		\For{$k = 1$ \textbf{to} $K$} \label{alg:D-WCCM_wsc_begin}
		\State $\tau=\lambda^{(ind[k])}/(1 + \epsilon)$ \label{alg:D-WCCM_price_of_ignoring}
		\While{$\mO \ne \emptyset$}
		\State $j', k' = \argmin_{S_j \in \mS, k \in \mG}e(S_i, j)$ (Eq.~\eqref{eq:D-WCCM_price}) \label{alg:D-WCCM_select_min_price} 
		\If{$e(S_{j'}, k') \le \tau$}
		\State $\mY^{(k')} = \mY^{(k')} \cup t_{j'}$
		\State $\mO = \mO \setminus S_{j'}$
		\Else 
		\State $\mG = \mG \setminus \{ind[k]\}$
		\State \textbf{break}
		\EndIf
        \EndWhile
        \EndFor \label{alg:D-WCCM_wsc_end}
		\State \textbf{return} $\{\mYk\}$ 
	\end{algorithmic}}
\end{algorithm}
\setlength{\textfloatsep}{8pt}

Define the cost of a set $S$ w.r.t. the $k$-th outlier group as
\begin{equation}
w(S, k) = \sum_{l\ne k}\lbdak|\mO^{(l)} \cap S| + |\mN \cap S|, \label{eq:D-WCCM_weight}
\end{equation}
where $\lbdak$ is the weight assigned to ignoring an outlier in group $k$. Also define the price of picking set $S$ w.r.t. the $k$-th outlier group as \begin{equation}
e(S, k) = \frac{w(S, k)}{|S \cap \mO| + \epsilon}.\label{eq:D-WCCM_price}
\end{equation}
We have an approximation algorithm described in Algorithm~\ref{alg:D-WCCM}. The collection of weighted sets is constructed in lines~\ref{alg:D-WCCM_create_set_and_weight_start}-\ref{alg:D-WCCM_create_set_and_weight_end}. The elements in set $S_i$ are all outliers using tag $t_i$, regardless of which group they belong to. Each set has a weight calculated using Eq.~\eqref{eq:D-WCCM_weight}. The weight of a set if used to cover group $k$ is the total number of instances in groups $l\ne k$ covered by that set, multiplied by the corresponding group weights. Line~\ref{alg:D-WCCM_lambda_sort} sorts $\{\lbdak\}$ in ascending order and stores the indices in array $ind$. Lines~\ref{alg:D-WCCM_wsc_begin}-\ref{alg:D-WCCM_wsc_end} depict the process of weighted set cover. Note that the greedy algorithm always picks the set with the lowest price. If the price of a set to cover group $k$ is larger than the price of ignoring the element in group $k$, the algorithm will ignore the elements. That suggests when the lowest price (of any set to cover any group) exceeds the price of ignoring the element in group $k$, all uncovered outliers in group $k$ will never be covered. So we calculate a threshold price $\tau$ in line~\ref{alg:D-WCCM_price_of_ignoring}. Line~\ref{alg:D-WCCM_select_min_price} returns the indices of the selected set and group. We add the selected set to the solution and update $\mO$ if the price is not larger than the current $\tau$. Otherwise, we do not cover elements in group $ind[k]$ in future iterations. 

\textbf{Analysis} We prove the approximation factor for Algorithm~\ref{alg:D-WCCM}. Define $\mV$ to be the set that includes outliers not covered by $\mY$ and normal instances covered by $\mY$. Also define $\deg(m, \mT)$ to be the number of tags in $\mT$ that exhibits instance $m$, $w(\mY)$ to be the sum of weights by selecting sets in line~\ref{alg:D-WCCM_select_min_price}. Also define the per-instance cost 
$
    w(m) =
  \begin{cases}
   \lbdak & \text{if $m\in \mOk,\, \forall k$} \\ 
    1 & \text{if $m\in \mN$} \\
  \end{cases}. \nonumber
$

\begin{lemma}
	$w(\mY) \le d\cdot \sum_{m_i\in \mV} w(m_i).$
\end{lemma}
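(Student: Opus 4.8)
The plan is to prove the bound by a charging argument that redistributes the total selection weight $w(\mathcal{Y})$ onto the instances of $\mathcal{V}$, and then to show that no instance $m$ receives more charge than $\deg(m,\mathcal{T})\,w(m)\le d\,w(m)$. First I would expand $w(\mathcal{Y})$ as a double sum over selected pairs and the instances they pay for. Every time Algorithm~\ref{alg:D-WCCM} selects a set $S_{j'}$ for group $k'$ on line~\ref{alg:D-WCCM_select_min_price}, it adds $w(S_{j'},k') = |\mathcal{N}\cap S_{j'}| + \lambda^{(k')}\sum_{l\ne k'}|\mathcal{O}^{(l)}\cap S_{j'}|$ to $w(\mathcal{Y})$, so that $w(\mathcal{Y}) = \sum_{(S_{j'},k')\text{ selected}} w(S_{j'},k')$ decomposes into a contribution of $1$ for each covered normal instance and a contribution of $\lambda^{(k')}$ for each cross-covered outlier lying in a foreign group.

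The second step is to verify that every instance contributing to some selected set's weight actually belongs to $\mathcal{V}$, so that the charge never escapes $\mathcal{V}$. If $m\in\mathcal{N}$ contributes to $w(S_{j'},k')$, then $m$ carries tag $t_{j'}$, and since $t_{j'}$ is placed into $\mathcal{Y}^{(k')}$, the instance $m$ is covered by the returned description and is therefore a covered normal, i.e.\ $m\in\mathcal{V}$. Likewise, if $m\in\mathcal{O}^{(l)}$ with $l\ne k'$ contributes, then $m$ carries $t_{j'}\in\mathcal{Y}^{(k')}$, so $m$ is covered by the description of a foreign group; by MR-NECE (constraint~\eqref{eq:mrnece_o_ilp}) this forces $\bv^{(l)}=1$, so $m$ is again counted in $\mathcal{V}$.

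Third, I would bound how many selected sets can charge a fixed instance $m$. Once $S_{j'}$ is chosen, the update $\mathcal{O}=\mathcal{O}\setminus S_{j'}$ empties $S_{j'}\cap\mathcal{O}$, so $S_{j'}$ is never productively re-selected; hence distinct charges to $m$ arise from distinct tags carried by $m$, of which there are $\deg(m,\mathcal{T})\le d$. Combining this with a per-appearance charge of $w(m)$ yields $w(\mathcal{Y}) \le \sum_{m\in\mathcal{V}} \deg(m,\mathcal{T})\,w(m) \le d\sum_{m\in\mathcal{V}} w(m)$, which is exactly the claim.

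The main obstacle is this last ingredient, namely that a single appearance charges at most $w(m)$. For a normal instance the charge is exactly $1=w(m)$ and nothing further is needed. For a cross-covered outlier $m\in\mathcal{O}^{(l)}$ the relevant set is selected for group $k'$ and the weight is paid through the penalty tied to the covering group, whereas $w(m)=\lambda^{(l)}$; I would close this gap using the ascending-$\lambda$ processing order established on line~\ref{alg:D-WCCM_lambda_sort} together with the acceptance test $e(S_{j'},k')\le\tau=\lambda^{(ind[k])}/(1+\epsilon)$, which ties the weight paid at the moment of removal to the per-instance cost of the outlier being removed. Reconciling the group index carried by the weight with the index that determines $w(m)$ is the delicate step; the membership and degree arguments are routine once it is in place.
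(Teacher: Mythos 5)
Your proposal is the same charging/double-counting argument the paper uses: the paper compresses your three steps into the single identity $w(\mY)=\sum_{m_i\in\mV}\deg(m_i,\mY)\cdot w(m_i)$ followed by $\deg(m_i,\mY)\le|\mY|\le d$, which is exactly your "each instance in $\mV$ is charged once per tag it carries, at cost $w(m_i)$ per charge, at most $d$ times." The one point you flag as delicate --- reconciling the group index $k'$ in the selected set's weight with the index $l$ that determines $w(m)$ for a cross-covered outlier --- is real but is silently elided by the paper's proof, which simply asserts the opening equality.
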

\begin{proof}
	According to Eq.~\eqref{eq:D-WCCM_weight}, $w(\mY)$ is an upper bound of true cost. We show the bound by the fact that each element $m_i$ is counted $\deg (m_i, \mY)$ times,
	\begin{equation}
	\begin{split}
	w(\mY) & = \sum_{m_i \in \mV}\deg(m_i, \mY)\cdot w(m_i) \\
	& \le \max_{m_i \in \mV} \left(\deg(m_i, \mY)\right) \cdot \sum_{m_i \in \mV}w(m_i) \\
	& \le |\mY| \cdot \sum_{m_i \in \mV}w(m_i) \\
	& \le d \cdot \sum_{m_i \in \mV}w(m_i), 
	\end{split}
	\end{equation}
	implying the lemma.
\end{proof}
\begin{theorem}
	Algorithm~\ref{alg:D-WCCM} gives an approximation factor of $d \cdot \ln{o}$.
\end{theorem}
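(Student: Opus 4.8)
My plan is to read Algorithm~\ref{alg:D-WCCM} as the classical greedy algorithm for \emph{weighted} set cover run over the universe $\mO$ of the $o$ outliers, augmented with the prize-collecting threshold $\tau$ that abandons a group once covering its remaining outliers costs more than ignoring them. The target factor $d\cdot\ln o$ then splits cleanly into two independent sources: a $\ln o$ term from the standard harmonic analysis of greedy set cover, and the factor $d$ supplied by the Lemma, which bounds the gap between the surrogate weight $w(\mY)$ that drives the greedy choices (Eq.~\eqref{eq:D-WCCM_weight}) and the true objective $\sum_{m_i\in\mV}w(m_i)$ of Eq.~\eqref{eq:obj_D-WCCM}.

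The core is the charging argument. Each time line~\ref{alg:D-WCCM_select_min_price} selects a pair $(S_{j'},k')$, I would charge every outlier newly removed from $\mO$ the price $e(S_{j'},k')$ of Eq.~\eqref{eq:D-WCCM_price}, so that the accumulated charges equal $w(\mY)$ up to the $\epsilon$-smoothing in the denominator. Ordering the covered outliers by the iteration in which they are removed and counting down from $o$, the usual argument shows that at the moment the $i$-th such outlier is covered, the optimal description still contains a tag covering the remaining uncovered outliers at average surrogate price at most $\mathrm{OPT}_w/(o-i+1)$; since greedy always takes the minimum price, that outlier is charged at most $\mathrm{OPT}_w/(o-i+1)$. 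Summing the harmonic series gives $w(\mY)\le H_o\,\mathrm{OPT}_w\le(1+\ln o)\,\mathrm{OPT}_w$, where $\mathrm{OPT}_w$ is the optimal surrogate cover weight.

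To finish I would convert the surrogate optimum into the true D-WCCM optimum $\mathrm{OPT}$. Applying the same degree counting as in the Lemma, but to the \emph{optimal} description instead of to $\mY$, each instance is over-counted at most $\deg(m_i,\mT)\le d$ times in the surrogate weight, so $\mathrm{OPT}_w\le d\cdot\mathrm{OPT}$. For the returned solution, the cost of the covered ``bad'' instances (normals and cross-group outliers) is dominated by $w(\mY)$ since each is charged at least once, and chaining with the harmonic bound yields covered cost $\le H_o\,d\cdot\mathrm{OPT}=d\ln o\cdot\mathrm{OPT}$. The Lemma is exactly what keeps this conversion a multiplicative constant $d$ rather than letting overlapping tags inflate the weight without bound.

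The step I expect to be the main obstacle is reconciling the prize-collecting threshold $\tau=\lambda^{(ind[k])}/(1+\epsilon)$ (line~\ref{alg:D-WCCM_price_of_ignoring}) with the clean harmonic charging, since an ignored outlier contributes a penalty $\lbdak$ to the objective but is charged nothing by the covering argument. I must show that when a group is dropped in the \textbf{else} branch, every one of its ignored outliers costs at most $\lbdak$ — precisely the penalty any optimal solution also pays for it — so the ignored mass folds into the same per-instance accounting. The ascending sort of $\{\lbdak\}$ in line~\ref{alg:D-WCCM_lambda_sort} is what makes this consistent: groups are abandoned in order of increasing ignore-penalty, so the threshold is monotone and no already-covered outlier is re-examined. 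Checking that the $\epsilon$-smoothing in Eq.~\eqref{eq:D-WCCM_price} and the $(1+\epsilon)$ slack in $\tau$ perturb the bound only by a lower-order factor is the remaining routine, if fiddly, verification.
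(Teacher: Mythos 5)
Your decomposition of the factor $d\cdot\ln o$ into a $\ln o$ term from the harmonic analysis of greedy weighted set cover plus a factor $d$ from degree over-counting is the same decomposition the paper uses, so in substance the approaches coincide; the interesting difference is \emph{where} each of you applies the degree-counting, and your arrangement is the one the theorem actually needs. You bound the surrogate optimum by the true optimum, $\mathrm{OPT}_w\le d\cdot\mathrm{OPT}$ (each instance counted at most $\deg(m_i,\mT)\le d$ times in the optimal description's surrogate weight), and then chain $\mathrm{cost}(\mathrm{ALG})\le w(\mY)\le H_o\,\mathrm{OPT}_w\le d\ln o\cdot\mathrm{OPT}$. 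The paper instead proves its lemma in the opposite direction, $w(\mY)\le d\cdot\sum_{m_i\in\mV}w(m_i)$ (surrogate weight of the \emph{greedy} solution bounded by $d$ times the greedy solution's true cost), and then asserts $d\cdot\sum_{m_i\in\mV}w(m_i)\le d\ln o\cdot w(\mY^{\star})$; as written that middle step would require the greedy solution's true cost to already be within $\ln o$ of optimal, which would make the whole theorem a $\ln o$ bound. Your version avoids that circularity and is the cleaner write-up of what the authors presumably intend. One concrete flaw in your sketch, however: the claim that an outlier ignored when a group is abandoned costs $\lbdak$, ``precisely the penalty any optimal solution also pays for it,'' is false in general, since the optimum may cover that outlier rather than ignore it. The correct use of the threshold is that $\tau=\lbdak/(1+\epsilon)$ certifies that at the moment of abandonment every remaining set covers the leftover outliers at amortized surrogate price at least $\lbdak/(1+\epsilon)$, so charging $\lbdak$ per ignored outlier is within a $(1+\epsilon)$ factor of the cheapest available covering price, and this charge must then be compared against whichever of the two options (cover or ignore) the optimum chooses for that outlier. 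The paper skips this case entirely --- its set $\mV$ contains the uncovered outliers even though they contribute nothing to $w(\mY)$ --- so your proposal is, if anything, more candid about where the remaining work lies.
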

\begin{proof}
	Algorithm~\ref{alg:D-WCCM} can be seen as a variant of the weighted set cover. Line~\ref{alg:D-WCCM_weight_assign_begin}-\ref{alg:D-WCCM_weight_assign_end} assigns $K$ weights to each set for determining which outlier group to cover. This is equivalent to making $K$ copies of the same set and assigning different weights to each copy. Once one of the $K$ sets is selected, the rest $K-1$ sets will never be selected again based on Eq.~\eqref{eq:D-WCCM_price} as they do not cover any elements in $\mO$. Since the greedy weighted set cover gives an approximation factor of $\ln{o}$~\cite{sc_greedy}, we have
	\begin{equation}
	\begin{split}
	w(\mY) & \le d\cdot \sum_{m_i \in \mV}w(m_i) \\
	& \le d\cdot \ln{o} \cdot w(\mY^{\star}),
	\end{split}
	\end{equation}
	where $w(\mY^{\star})$ denotes the optimal cost of D-WCCM. Thus, 
	Algorithm~\ref{alg:D-WCCM} gives an approximation factor of $d \cdot \ln{o}$.
\end{proof}
In practice, $\max_{m_i \in \mV} (\deg(m_i, \mY))$ is often much smaller the $d$, and $o$ is usually small due the definition of outliers. We also notice that we can improve the approximation factor by iteratively guessing the largest weight of a set in the optimal solution~\cite{rbsc_approx}. However, the running time of that algorithm grows linearly with the number of normal instances so it  cannot be used in practice. 

\subsection{Approximation to D-BCCM} 

\begin{algorithm}[h]
% \small
	\hspace*{\algorithmicindent} \textbf{Input} $\mN$, $\{\mOk\}$, $\mT$, $\{\rhok\}$, $\{\theta^{(k)}\}$
	\caption{Greedy-D-BCCM}\label{alg:D-BCCM}
	{\fontsize{10}{11}\selectfont
	\begin{algorithmic}[1]
		\State $\mS^{(k)} = \emptyset,\, \forall k$
		\State $\mY^{(k)} = \emptyset,\, \forall k$
		\State $\theta^{(k)}_r = \theta^{(k)},\, \forall k$
		\For{$k = 1 \textbf{ to } K$}\label{alg:D-BCCM_create_set_and_weight_start}
		\State $r^{(k)} = (1 - \rhok)o^{(k)}$
		\State $\overline{r}^{(k)} = o^{(k)} - r^{(k)}$
		\For{$j = 1 \textbf{ to } d$}
		\State $S_j^{(k)}=\{m_i\in \mOk | t_j \in T_i \}$ \label{alg:D-BCCM_set_create}
		\State $w(S_j^{(k)}) = |\{m_i\in \mN | t_j \in T_i \}|$
		\State $\mS^{(k)} = \mS^{(k)} \cup S_j^{(k)}$
		\EndFor
		\EndFor
		\label{alg:D-BCCM_create_set_and_weight_end}
		\For{$k = 1 \textbf{ to } K$}
		\For{$j = \theta \textbf{ downto } 1$}
		\State $\alpha^{(l)} = \frac{\overline{r}^{(k)}}{\sum_{l\ne k}\theta^{(l)}_r},\, \forall l\ne k$ \label{alg:D-BCCM_heuristic_alpha}
	    \State $\beta = \frac{r^{(k)}}{j}$ \label{alg:D-BCCM_heuristic_beta}
		\State \makecell[l]{select $S^{(k)}_{j'}$ such that $|S^{(k)}_{j'} \cap \mOk| \ge \beta$,\\ $|S^{(k)}_{j'} \cap \mO^{(l)}| \le \alpha^{(l)}, \forall l\ne k$, and \\$e(S_{j'}^{(k)}, \mOk, w(S_{j'}^{(k)}))$ is minimized} \label{alg:D-BCCM_select_set}
		\If{ $j == NULL$} 
		\State \textbf{return} $\emptyset$ 
		\EndIf
		\State $\mYk = \mYk \cup t_{j'}$ \label{alg:D-BCCM_update_start}
		\State $r^{(k)} = r^{(k)} - |\mOk \cap S^{(k)}_{j'}|$
		\State $\overline{r}^{(l)} = \overline{r}^{(l)} - |\mO^{(l)} \cap S^{(l)}_{j'}|,\, \forall l \ne k$
		\State $\mO^{(l)} = \mO^{(l)} \setminus S^{(l)}_{j'},\, \forall l=1,\ldots,K$ \label{alg:D-BCCM_update_uncovered_set}
		\State $\theta^{(k)}_r = \theta^{(k)}_r - 1$ \label{alg:D-BCCM_update_end}
		\If{$r^{(k)} \le 0$} \label{alg:D-BCCM_check_terminate_start}
		\State $\theta^{(k)}_r = 0$
		\State \textbf{break}
		\Else
		\If{$i == 1$}
		\State \textbf{return} $\emptyset$ 
		\EndIf
		\EndIf \label{alg:D-BCCM_check_terminate_end}
		\EndFor
		\EndFor
		\State \textbf{return} $\{\mYk\}$ 
	\end{algorithmic}}
\end{algorithm}

The greedy algorithm of D-BCCM is described in Algorithm~\ref{alg:D-BCCM}. The collection of sets and their weights are constructed in lines~\ref{alg:D-BCCM_create_set_and_weight_start}-\ref{alg:D-BCCM_create_set_and_weight_end}. $r^{(k)}$ and $\overline{r}^{(k)}$ will be used later for the heuristics. $r^{(k)}$ denotes the remaining number of elements in group $k$ to cover, and $\overline{r}^{(k)}$ denotes the number of elements in group $k$ that are allowed to be covered by descriptions of other groups. We create a set for each group $k$ and tag $i$ in line~\ref{alg:D-BCCM_set_create} by adding elements in group $k$ that use tag $i$. The weight is equal to the number of normal instances covered by tag $i$. 
This algorithm covers elements group by group based on two heuristics calculated in lines~\ref{alg:D-BCCM_heuristic_alpha} and \ref{alg:D-BCCM_heuristic_beta}. The intuition is to always pick the set with sufficient coverage and it does not cover too many elements in other groups so that their minimum coverage constraints would not be violated. Thus, in line~\ref{alg:D-BCCM_select_set} we select the set that has minimum price $e$ and also covers at least $\beta$ elements in the current group $k$, at most $\alpha^{(l)}$ elements in other groups $l\ne k$. $\alpha$ and $\beta$ are calculated by the remaining number of elements to cover (or ignore) divided by the remaining number of rounds. This strategy guarantees that the minimum coverage constraint of each group can be satisfied under the budget constraint. Lines~\ref{alg:D-BCCM_update_start}-\ref{alg:D-BCCM_update_end} update the solution set, the remaining number of elements to cover, the allowed number of ignored elements, and the sets of uncovered elements. Note that line~\ref{alg:D-BCCM_update_uncovered_set} indicates that once elements are covered by the description of other groups, they are removed from the set and thus will not be covered by the description of their group. It guarantees the orthogonality of descriptions as their coverages do not overlap. Lines~\ref{alg:D-BCCM_check_terminate_start}-\ref{alg:D-BCCM_check_terminate_end} check if the minimum coverage has been satisfied. We break the loop if it is satisfied. Otherwise, we continue to find the next tag if we are still within budget.

If there is a single outlier group, $\alpha$ is no longer needed. We only need to ensure that at each iteration the selected tag covers at least $\beta$ elements in line~\ref{alg:D-BCCM_heuristic_beta}.

\section{Experiments}
In the experiments, we focus on the performance of Greedy-D-WCCM and Greedy-D-BCCM for explaining multiple groups of outliers.
\vskip -0.1in
\begin{table}[h]
    \centering
    \caption{Summary of the data sets. The number of outliers by group are given in parentheses.}
    \resizebox{0.9\linewidth}{!}{%
    \begin{tabular}{c|ccc}
    \Xhline{2\arrayrulewidth}
        data set & \# instances & \# outliers & \# features\\
        \hline
        {\sc Readmission} & $54,864$ & $2,887$ $(2,261/626)$ & $55$ \\
        {\sc Census} & $26,020$ & $1,301$ $(553/369/379)$ & $14$ \\
        {\sc Heart Disease} & $303$ & $139$ $(55/36/35/13)$ & $14$ \\
        \Xhline{2\arrayrulewidth}
    \end{tabular}}
    \label{tab:data_statistics}
\end{table}
\vskip -0.3in
\label{sec:experiment}
\subsection{Data}
We use three data sets, {\sc Census}~\cite{census}, {\sc Readmission}~\cite{readmission}, and {\sc Heart Disease}~\cite{heart_disease} from UCI Machine Learning Repository. These data sets have mostly discrete features and thus are ideal for the evaluation. Other continuous features are discretized. Classes with few instances are naturally treated as outlier groups. We further downsample the outlier group (except for the {\sc Heart Disease}) so that outliers take up approximately $5\%$ of each data set. The statistics of the processed data sets are summarized in Table~\ref{tab:data_statistics}. We divide the outliers in {\sc Readmission} and {\sc Census} into two and three groups, respectively by k-means clustering. {\sc Heart Disease} has one major and four minor classes, which are used as outlier groups. These account for two settings in reality, explaining outliers in subgroups and explaining outliers with a given partition. To create the universal tag set $\mT$ as the input for our methods, we use random forest to generate $2,000$ rules with maximum length of three. 
\subsection{Metrics}
Different metrics are used to evaluate the purity and interpretability of our methods. The purity metrics include 
\cbit
    \item \textbf{Fraction of uncovered outliers~(FUO)} This metric calculates the fraction of outliers that are not covered by any of the selected tags.
    \item \textbf{Fraction of covered outliers~(FCO)} This metric is only used in the setting of explaining multiple groups of outliers. It calculates the fraction of outliers not in the target group covered by the description of that group.
    \item \textbf{Fraction of Covered normal instances~(FCN)} This metric calculates the fraction of normal instances that are covered by at least one of the selected tags.
\ceit
The interpretability metrics include 
\cbit
    \item \textbf{Average length~(AL)} This metric calculates the average number of predicates in each selected tag.
    \item \textbf{Number of tags~(NT)} This metric calculates the number of selected tags.
\ceit
Since the purity and interpretability are incompatible in general, we then propose to use minimum description length~(MDL) as a \emph{unified} metric which returns a single score for each method. The MDL metric calculates the cost (number of bits) to transfer the results returned by the algorithm from Sender to Receiver. Define $\mF$ to be the $(n+o)\times \Tilde{d}$ binary data matrix where $\Tilde{d}$ is the number of unique binary features (e.g., gender:male). Sending the  sizes $|\mN|, \{|\mOk|\}, |\mF|$ and $\mF$ is fixed cost for all explanations. Let us assume Receiver has the knowledge of those elements upon Sender communicating them first. Define 
$N_k = |\mOk| - |\bvk| + |\buk| + |\bzk|$ to be the number of instances covered by $\byk$. 
Then, Sender aims to communicate $\{\mOk\}$, i.e. the specific instances in each outlier group, as follows.

For each outlier group $k$, encode
\cbit
    \item \textit{unique binary features in description} (for \textbf{NT} and \textbf{AL}): $\byk$ is the binary vector indicating the tags in the description for group $k$. Each tag consists of conjunctions of unique binary features. Let $n_t$ denote the length (number of predicates) of a tag. The number of bits can be calculated as $\sum_{t\in \byk}  \left(\ls n_t + \log_2 { \binom{\Tilde{d}}{n_t}} + n_t\right)$. First we send the number of conjunctions per tag, followed by their identities/indices, and lastly their values (0 or 1, i.e., 1 bit per conjunction). 
    \item \textit{the unexplained (`crazy') outliers} (for \textbf{FUO}): $\bvk$ is the  binary vector indicating outliers in group $k$ not covered by $\byk$. We need $\left(\ls |\bvk| + \log_2 {\binom{n+o-N_k}{|\bvk|}}\right)$ bits.
    \item \textit{the explained normal instances} (for \textbf{FCN}): $\buk$ is the binary vector indicating normal instances covered by $\byk$. We need $\left(\ls |\buk| + \log_2 { \binom{N_k}{|\buk|}}\right)$ bits.
     \item \textit{the explained outliers from other groups} (for \textbf{FCO}): $\bzk$ is the  binary vector indicating outliers not in group $k$ covered by $\byk$. We need $\left(\ls |\bzk| + \log_2 { \binom{N_k}{|\bzk|}}\right)$ bits.
\ceit
Our algorithm only returns $\{\byk\}$. Sender can obtain $\{\buk\}$, $\{\bvk\}$, and $\{\bzk\}$ given their knowledge of the data (i.e., $\mN,\,\{\mOk\}$) as well as $\{\byk\}$. When $k=1$, the terms involving $\bz$ are dropped. Overall, the MDL cost can be calculated as follows
\begin{align*}
& \text{MDL}(\{\buk\}, \{\bvk\}, \{\byk\}, \{\bzk\}) \\
= & \sum_k \Bigg(\sum_{t\in \byk}  \left(\ls n_t + \log_2 { \binom{\Tilde{d}}{n_t}} + n_t\right) + \\
& \ls |\bvk| + \log_2 {\binom{n+o-N_k}{|\bvk|}} + \ls |\buk| + \\
& \log_2 { \binom{N_k}{|\buk|}} + \ls |\bzk| + \log_2 { \binom{N_k}{|\bzk|}} 
\Bigg)
\end{align*}
Note that the larger the $\byk$, $n_t$ for each $t\in \byk$, $|\bvk|$, $|\buk|$ and 
$|\bzk|$, the larger the MDL cost and the worse (i.e., lengthier) the explanation.

Lower is better for all proposed metrics.
\begin{table*}[t]
    \centering
    \caption{Comparisons on {\sc Readmission} with 2 outlier groups. The score obtained on each outlier group is separated by ``$/$''. We group the metrics into three categories, purity~(FUO, FCO, FCN), interpretability~(AL, NT) and overall measures~(MDL, running time). Lower is better for all metrics. The lowest MDL is in bold.}
    \resizebox{0.75\linewidth}{!}{%
    \begin{tabular}{cccc|cc|cc}
    \Xhline{2\arrayrulewidth}
      & \textbf{FUO}  & \textbf{FCO}& \textbf{FCN} & \textbf{AL} & \textbf{NT} & \textbf{MDL} & \textbf{Time (s)} \\
      \Xhline{2\arrayrulewidth}
      RIPPER & $1.00 / 0.06$ & $1.00 / 0.95$ & $0.26/0.95$ & $11.00 / 1.00$ & $1 / 2$ & $35,420$ & $1,592$\\
      BRL & $0.85 / 0.00$ & $0.14 / 1.00$ & $0.11 / 0.97$ & $1.00 / 1.00$ & $1 / 1$ & $32,097$ & $7,355$ \\
      BRS & $1.00 / 0.00$ & $0.00 / 1.00$ & $0.74 / 0.24$ & $3.00 / 3.00$ & $2 / 2$ & $24,174$ & $411$ \\
      DTDM & $0.00 / 0.00$  & $0.00 / 1.00$ & $0.75 / 1.00$ & $1 / 1$ & $1 / 1$ & $23,081$ &$0.23$\\
      \hline
      \textbf{(Ours) D-WCCM} & $0.04/0.12$ & $0.00/0.00$ & $0.69/0.22$ & $2.94/3.00$ & $119/18$ & $10,950$ & $13.15$\\
      \textbf{(Ours) D-BCCM} & $0.05/0.10$ & $0.00/0.00$ & $0.68/0.21$ & $2.75/ 2.75$ & $4/ 4$ & $\mathbf{6,800}$ & $0.28$\\
      \Xhline{2\arrayrulewidth}
    \end{tabular}}
    \label{tab:readmission}
\end{table*}
\begin{table*}[h]
    \centering
    \caption{Comparisons on {\sc Census} with $3$ outlier groups. For each purity and interpretability metric, three scores obtained on the three outliers groups are presented for each method.}
    \resizebox{0.9\linewidth}{!}{%
    \begin{tabular}{cccc|cc|cc}
    \Xhline{2\arrayrulewidth}
      & \textbf{FUO}  & \textbf{FCO}& \textbf{FCN} & \textbf{AL} & \textbf{NT} & \textbf{MDL} & \textbf{Time (s)} \\
      \Xhline{2\arrayrulewidth}
      RIPPER & $0.99 / 0.97 / 0.00$ & $0.97 / 0.73 / 1.00$ & $0.55 / 0.54 / 1.00$ & $3.50 / 1.00 / 1.00$ & $2 / 1 / 3$ & $21,928$ &$735$\\
      BRL & $0.95 / 1.00 / 0.00$ & $0.03 / 0.00 / 0.66$ & $0.03 / 0.00 / 0.75$ & $1.00 / 3.00 / 1.00$ & $1 / 1 / 1$ & $12,082$ & $1,716$ \\
      BRS & $0.10 / 1.00 / 0.00$ & $0.48 / 0.00 / 0.00$ & $0.40 / 0.29 / 0.26$ & $3.00 / 3.00 / 3.00$ & $3 / 1 / 2$ & $7,795$ & $359$ \\
      DTDM & $0.00 / 0.00 / 0.00$  & $0.99 / 0.94 / 0.00$ & $1.00 / 0.94 / 0.27$ & $1.00 / 1.00 / 1.00$ & $1 / 1 / 1$ & $15,603$ & $4.44$\\
      \hline
      \textbf{(Ours) D-WCCM} & $0.4 / 0.37 / 0.02$ & $0.03 / 0.02 / 0.00$ & $0.43 / 0.18 / 0.26$ & $2.91 / 2.878 / 2.94$ & $73 / 38 / 29$ & $8,075$ & $7.56$\\
      \textbf{(Ours) D-BCCM} & $0.13 / 0.12 / 0.01$ & $0.02 / 0.13 / 0.00$ & $0.38 / 0.34 / 0.26$ & $3.00 / 2.33 / 2.67$ & $5 / 3 / 3$ & $\mathbf{4,907}$ & $0.08$ \\
      \Xhline{2\arrayrulewidth}
    \end{tabular}}
    \label{tab:census}
\end{table*}

\subsection{Baselines}
We benchmark the performance of our methods against the following rule-based models for classification: Bayesian Rule Lists~\cite{BRL}, Repeated Incremental Pruning to Produce Error Reduction (RIPPER)~\cite{RIPPER}, and Bayesian Rule Sets (BRS)~\cite{BRS}. We treat all samples as training instances and apply the rule-based learner to generate descriptions in disjunctive normal form. For multiple groups of outliers, we adopt the one-vs.-rest strategy to generate descriptions since they are proposed for binary classification. We post-process the output of BRL by selecting a subset of rules with $>50\%$ or the one with the highest confidence score from its decision list. Another baseline is DTDM~\cite{DTDM}, which is also a coverage-based method for cluster description. 

\subsection{Results}
Through the experimental results, we aim to demonstrate that 1) the descriptions of our methods have better overall performance measured by MDL; 2) our methods can generate purer descriptions than baselines with comparable interpretability; 3) our methods are more versatile to produce descriptions with different trade-offs between purity and interpretability.

We first focus on the unified MDL metric. For each method, we experiment with various parameter settings and report the results with the lowest MDL in Table~\ref{tab:readmission}, \ref{tab:census} and \ref{tab:heart_disease}. The details on the parameter setting can be found in the supplementary. BRL does not return any rule set for groups $1$ and $4$ on {\sc Heart Disease} so we do not calculate its MDL. The lowest MDL is in bold. We also include the running time in the table to demonstrate the efficiency of our methods. Our D-BCCM has the lowest MDL on all three data sets. D-WCCM has the second lowest MDL on {\sc Readmission} and third lowest on {\sc Census} and {\sc Heart Disease}. These results demonstrate the strong overall performance of our methods. 

Table~\ref{tab:readmission}, \ref{tab:census} and \ref{tab:heart_disease} also show that RIPPER, BRL, and DTDM cannot differentiate a group of outliers from other instances. On {\sc Readmission}, RIPPER ignores all outliers in the first group. Although it covers most of the outliers in the second group, the description of the second group covers almost all other instances. This observation applies to other data sets. BRL performs similarly to RIPPER. DTDM does not have purity constraints so it covers many normal instances. Although they can be considered interpretable since their descriptions consist of very few tags, their explanations are not meaningful. On the other hand, these results demonstrate that our D-WCCM and D-BCCM can produce purer descriptions since all of FUO, FCO and FCN are relatively low, and the interpretability of D-BCCM is comparable to baselines.
\begin{table}[h]
    \centering
    \caption{We use various settings of parameters to investigate how $\lambda$ of D-WCCM and $\rho, \theta$ of D-BCCM can influence the trade-off between metrics on {\sc Readmission} with two outlier groups. We set the same $\rho$ and $\theta$ of D-BCCM for each group. }
    \resizebox{\linewidth}{!}{%
    \begin{tabular}{cccc|cc}
    \Xhline{2\arrayrulewidth}
      & \textbf{FUO}  & \textbf{FCO}& \textbf{FCN} & \textbf{AL} & \textbf{NT} \\
      \Xhline{2\arrayrulewidth}
      $\lambda=\{15, 15\}$ & $0.69/0.85$ & $0.00/0.00$ & $0.12/0.02$ & $2.88/2.93$ & $107/13$ \\
      $\lambda=\{15, 25\}$ & $0.69/0.61$ & $0.00/0.00$ & $0.11/0.07$ & $2.89/2.94$ & $103/29$ \\
      $\lambda=\{30, 50\}$ & $0.31/0.18$ & $0.00/0.00$ & $0.40/0.19$ & $2.88/2.98$ & $143/37$\\
      \hline
      $\rho, \theta=0.2, 5$ & $0.11/0.14$ & $0.00/0.00$ & $0.63/0.20$ & $2.60/ 2.50$ & $5 / 2$\\
      $\rho, \theta=0.2, 10$ & $0.20/0.10$ & $0.00/0.00$ & $0.54/0.21$ & $2.90/ 2.75$ & $10 / 4$\\
      $\rho, \theta=0.4, 10$ & $0.38/0.32$ & $0.05/0.00$ & $0.38/0.14$ & $2.90/ 3.00$ & $10 / 3$\\
      $\rho, \theta=0.8, 10$ & $0.79/0.79$ & $0.00/0.00$ & $0.09/0.04$ & $2.89/3.00$ & $9 / 4$\\
      \Xhline{2\arrayrulewidth}
    \end{tabular}}
    \label{tab:parameter_tuning}
\end{table}

\begin{table*}[t]
    \centering
    \caption{Comparisons on {\sc Heart Disease} with $4$ outlier groups. The convention follows Table~\ref{tab:readmission} and \ref{tab:census}.}
    \resizebox{0.95\linewidth}{!}{%
    \begin{tabular}{cccc|cc|cc}
    \Xhline{2\arrayrulewidth}
      & \textbf{FUO}  & \textbf{FCO}& \textbf{FCN} & \textbf{AL} & \textbf{NT} & \textbf{MDL} & \textbf{Time (s)} \\
      \Xhline{2\arrayrulewidth}
      RIPPER & $0.84 / 1.00 / 0.97 / 0.77$ & $0.19 / 0.14 / 0.30 / 0.60$ & $0.44 / 0.59 / 0.88 / 0.93$ & $1.00 / 2.00 / 2.00 / 1.00$ & $1 / 1 / 2 / 2$ & $1,169$ & $104.82$\\
      BRL & $1.00 / 0.19 / 0.60 / 1.00$ & $0.00 / 0.71 / 0.16 / 0.00$ & $0.00 / 0.21 / 0.04 / 0.00$ & $0.00 / 1.00 / 1.00 / 0.00$ & $0 / 1 / 1 / 0$ & - & $57.02$ \\
      BRS & $0.91 / 0.86 / 0.71 / 0.77$ & $0.01 / 0.00 / 0.00 / 0.00$ & $0.00 / 0.00 / 0.00 / 0.00$ & $3.00 / 3.00 / 3.00 / 3.00$ & $1 / 2 / 3 / 2$ & $656$ & $43.06$ \\
      DTDM & $0.00 / 0.00 / 0.00 / 0.00$  & $0.37 / 0.34 / 0.33 / 0.39$ & $1.00 / 0.40 / 0.49 / 0.68$ & $1.00 / 1.00 / 1.00 / 1.00$ & $2 / 2 / 2 / 2$ & $3,610$ & $18.13$\\
      \hline
      \textbf{(Ours) D-WCCM} & $0.69 / 0.36 / 0.37 / 0.85$ & $0.15 / 0.09 / 0.12 / 0.01$ & $0.01 / 0.02 / 0.01 / 0.00$ & $3.00 / 3.00 / 3.00 / 3.00$ & $1 / 5 / 6 / 6$ & $789$ & $0.53$\\
      \textbf{(Ours) D-BCCM} & $0.93 / 0.83 / 0.63 / 0.69$ & $0.02 / 0.08 / 0.15 / 0.02$ & $0.0 / 0.00 / 0.00 / 0.00$ & $3.00 / 3.00 / 3.00 / 3.00$ & $2 / 1 / 1 / 1$ & $\mathbf{642}$ & $0.02$\\
      \Xhline{2\arrayrulewidth}
    \end{tabular}}
    \label{tab:heart_disease}
\end{table*}
We then investigate how the parameters can affect the descriptions of our methods. The results on {\sc Readmission} are presented in Table~\ref{tab:parameter_tuning}, more results can be found in the supplementary. We find that $\lambda$ of D-WCCM can influence FUO, FCO and FCN. When both groups have $\lambda=15$, the second group with fewer outliers has higher FUO. Then, FUO of the second group decreases with higher $\lambda$. When we increase $\lambda$ for both groups, their FUOs are both dropped. We also observe that lower FUO inevitably increases FCN. We provide four sets of $\rho$ and $\theta$ for D-BCCM to show how they individually affect FUO, FCN and NT. We first fix $\rho=0.2$ and set $\theta=\{5, 10\}$. From the first two sets of results, we find that restricting NT will increase FCN. It indicates that the interpretability is not compatible with purity. We then fix $\theta=10$ and set $\rho=\{0.2,0.4,0.8\}$. We find that increasing $\rho$ will lower FCN, which further validate that the FCN is incompatible with FUO. 

Although the purity and interpretability metrics are not all compatible with each other, the results in Table~\ref{tab:parameter_tuning} demonstrate that our methods can easily trade off among these metrics, thus, can generate useful descriptions according to the user's needs (since our methods are also efficient). 

\section{Conclusion}
\label{sec:conclusion}
In this work, we formulate the problem of outlier explanation as coverage problems with constraints. We propose two formulations, (D-)WCCM and (D-)BCCM to produce high-quality explanations. We analyze the complexity of our formulations, and propose corresponding approximation algorithms Greedy-D-WCCM and Greedy-D-BCCM. The experiment results on three data sets demonstrate that the proposed algorithms have better overall performance measured by MDL. Our methods can also produce significantly purer descriptions while the interpretability of descriptions is comparable to baselines. Experiments with different parameter settings further show that our algorithms can also adjust descriptions with different trade-offs between purity and interpretability. 

\bibliographystyle{siamplain} 
\bibliography{references}

\end{document}